\documentclass[conference]{IEEEtran}
\IEEEoverridecommandlockouts
\usepackage{cite}
\usepackage{amsmath,amssymb,amsfonts}
\usepackage{graphicx}
\usepackage{textcomp}
\usepackage{xcolor}

\usepackage{amsthm}
\newtheorem{theorem}{Theorem}
\usepackage{algorithm}
\usepackage{algpseudocode}
\usepackage{enumerate}
\usepackage{booktabs}
\usepackage[hidelinks]{hyperref}

\def\BibTeX{{\rm B\kern-.05em{\sc i\kern-.025em b}\kern-.08em
    T\kern-.1667em\lower.7ex\hbox{E}\kern-.125emX}}
\begin{document}

\title{Exploring Weaknesses of Generative Image Watermarks against Latent Frequency Masking\\


\thanks{The work of K. Abud, A. Yakushev, A. Akimenkov, D. Obydenkov, Y. Markin and A. Antsiferova was supported by a grant, provided by the Ministry of Economic Development of the Russian Federation (agreement dated June 20, 2025 No. 139-15-2025-011, identifier 000000C313925P4G0002).}
}

\author{
\IEEEauthorblockN{Kirill Aistov\IEEEauthorrefmark{1}, Khaled Abud\IEEEauthorrefmark{2}\IEEEauthorrefmark{1}, Irina Serzhenko, Egor Kovalev\IEEEauthorrefmark{1},\\ Aleksey Yakushev\IEEEauthorrefmark{2}, Aleksandr Akimenkov\IEEEauthorrefmark{2}, Dmitry Obydenkov\IEEEauthorrefmark{2}, Yury Markin\IEEEauthorrefmark{2},\\ Sergey Lavrushkin\IEEEauthorrefmark{1}, Dmitriy Vatolin\IEEEauthorrefmark{1} and Anastasia Antsiferova\IEEEauthorrefmark{2}\IEEEauthorrefmark{1}}
\IEEEauthorblockA{\IEEEauthorrefmark{1}MSU AI Institute, Moscow, Russia\\
Email: aistovkk@my.msu.ru, \{khaled.abud, egor.kovalev, sergey.lavrushkin, dmitriy\}@graphics.cs.msu.ru}
\IEEEauthorblockA{\IEEEauthorrefmark{2}Trusted AI Research Center RAS, Moscow, Russia\\
Email: \{h.abud, yakushev, a.akimenkov, obydenkov, ustas, aantsiferova\}@ispras.ru}
}

\IEEEoverridecommandlockouts
\IEEEpubid{\makebox[\columnwidth]{\copyright2026 IEEE \hfill}
\hspace{\columnsep}\makebox[\columnwidth]{ }}

\maketitle

\IEEEpubidadjcol

\begin{abstract}
Invisible watermarking has become a central tool for tracing AI-generated images, but its robustness against adaptive removal attacks remains an open security question. We introduce Latent
Frequency Masking, an attack that erases watermark evidence by replacing selected Fourier coefficients in the latent representation of a watermarked image. The replacement can be
sampled from Gaussian noise for efficiency or derived from diffusion regeneration for improved image preservation. We provide a theoretical distortion bound relating the change between the reconstructed adversarial image and the masked latent-frequency perturbation. We evaluate the proposed attack against six diffusion watermarking methods on images generated from DiffusionDB and MS-COCO prompts. Latent Frequency Masking removes or substantially weakens several watermarks while preserving perceptual quality and achieving favorable runtime compared with existing attacks. These results identify latent-frequency manipulation as a practical attack surface and highlight the need to include such
attacks in robustness evaluations of generative image watermarking.
\end{abstract}

\begin{IEEEkeywords}
Image watermarking, diffusion models, generative AI security, watermark removal attacks, latent frequency masking
\end{IEEEkeywords}

\section{Introduction}
The rapid advancement of generative AI has started a new era of photorealistic image synthesis. Modern diffusion-based and autoregressive models can produce images of remarkable quality, making it difficult to distinguish synthetic content from authentic photographs. While these capabilities unlock significant creative and commercial potential, they also create critical societal challenges: preventing AI-generated misinformation, deepfakes, and fabricated visual content.
\begin{figure}[tbp]
    \centerline{\includegraphics[width=0.85\columnwidth]{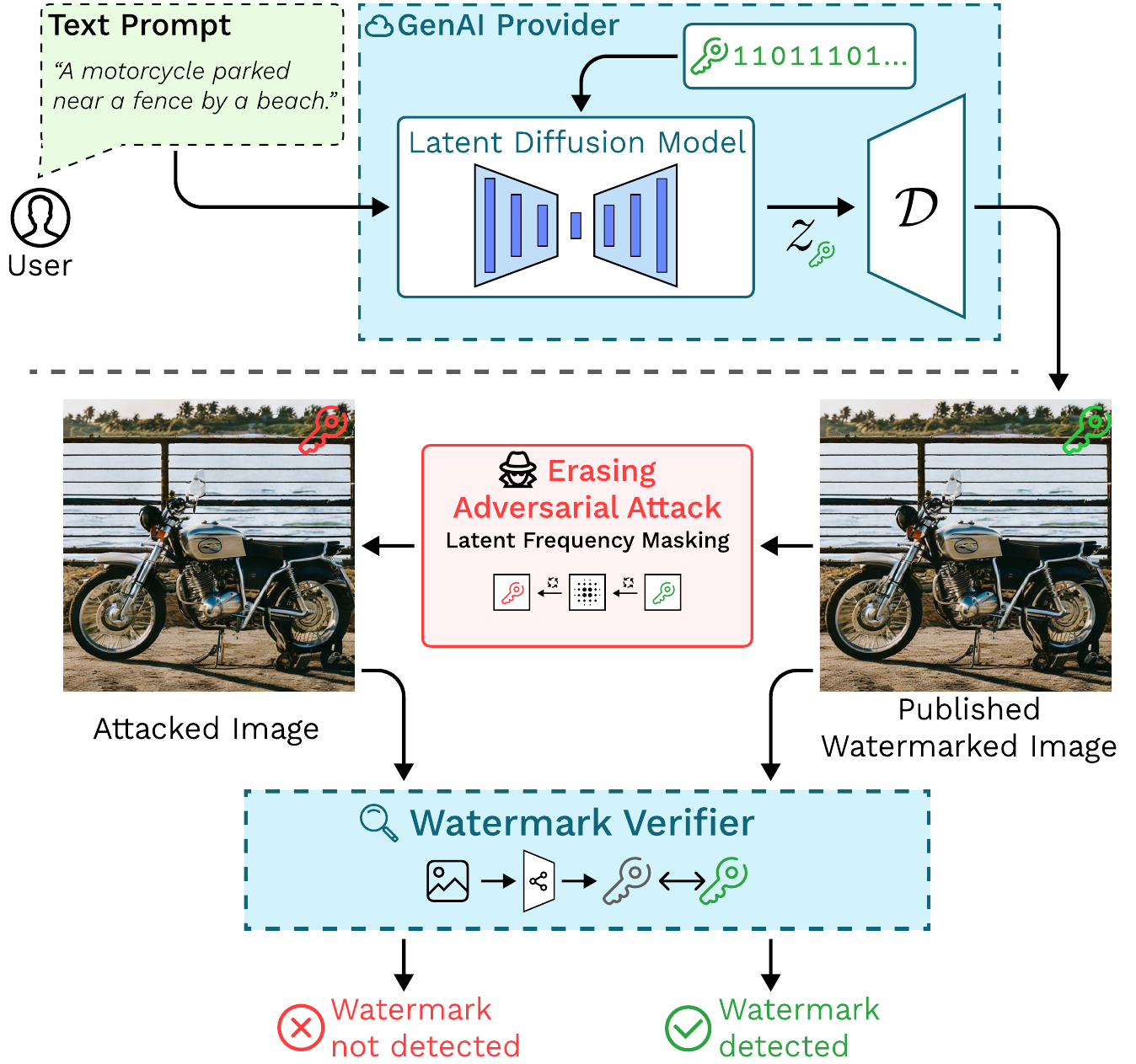}}
    \caption{Overview of the watermark robustness evaluation setup}
\label{fig:wm_scheme}
\vspace{-1.5em}
\end{figure}
Detecting generated images through empirical classifiers has proven insufficient --- such models tend to overfit to the artifacts of specific generators and fail to generalize to unseen architectures \cite{zheng2024breaking}. As a result, digital watermarking has emerged as the main paradigm for tracking AI-generated content: embedded imperceptible marks at the time of generation can later indicate which generation model has been used and even the exact user session. The urgency of this approach has been recognized at the policy level~\cite{act_overview}: 
the EU and China now mandate detectable marking of AI outputs, while the US is moving toward similar requirements through state law and proposed federal legislation.

The practical security of watermarking schemes remains an open question. As new watermarking methods are proposed, new attacks are designed to remove or corrupt the embedded signal. Figure 1 illustrates this setting: a provider embeds an imperceptible watermark at generation time, after which an adversary may attempt to erase it before publication, while a verifier seeks to recover the mark from the published image. A substantial body of work has examined the vulnerability of spatial-domain and learning-based watermarks; however, one class of methods has received comparatively little scrutiny: frequency-based watermarking schemes that encode information in an image's spectral representation.

In this paper, we expose a previously unreported vulnerability in this family of methods. We demonstrate that frequency-domain watermarks can be systematically removed via an adversarial attack that remains visually imperceptible, preserving image quality while fully neutralizing the embedded signal. To our knowledge, we are the first to systematically exploit latent-frequency masking against this class. We further discuss potential defense strategies and analyze their trade-offs.
Our main contributions are as follows:
\begin{itemize}
\item We identify and formalize a novel attack surface in generative image watermarking: the \textbf{latent-frequency representation} obtained by applying the Fourier transform to a diffusion model's VAE latent.
\item We propose \textbf{Latent Frequency Masking (LFM)}, a gradient-free removal attack that suppresses watermark evidence by replacing selected latent-frequency coefficients, with two replacement strategies --- efficient Gaussian sampling and image-adaptive diffusion regeneration.
\item We derive a distortion bound relating the pixel-space change to the VAE reconstruction error and the masked latent-frequency perturbation, characterizing the quality–removal trade-off.
\item We conduct extensive experiments against six diffusion watermarking schemes on DiffusionDB and MS-COCO, showing that LFM removes or substantially weakens most while preserving perceptual quality and achieving favorable runtime, and we analyze why distributed embedding (e.g., Gaussian Shading) resists it.
\end{itemize}

\section{Related work}
\subsection{Invisible Watermarking Methods}

Digital image watermarking techniques can be categorized by the stage at which the watermark is embedded. The following classification of methods is proposed:
\begin{enumerate}
    \renewcommand{\theenumi}{\roman{enumi}}
    \item post-generation methods, which apply a watermark to already generated or captured images.
    \item in-generation methods, which integrate the watermark directly into the image creation process,
\end{enumerate}

\textbf{Post-generation methods}, which are applied to already existing images, traditionally operate in the spatial, discrete cosine, Fourier, or wavelet domains. These methods embed watermarks into selected frequency coefficients~\cite{poljicak2011discrete, al2007combined, islam2020svm}. Contemporary neural methods adhere to the 'encoder — noise layer — decoder' architecture pioneered by HiDDeN~\cite{zhu2018hidden}, frequently incorporating invertible networks~\cite{ma2022towards}, adversarial training, and attention mechanisms to enhance imperceptibility and robustness~\cite{huang2023arwgan, zhang2019robust, fernandez2024video, souvcek2025pixel}. To demonstrate resilience against geometric attacks, recent works also explore multi-region embedding~\cite{guo2023practical, hu2025mask} and synchronization patterns~\cite {fernandez2025geometric}.

\textbf{In-Generation methods} have become the dominant approach for AI-generated content provenance. The majority of these models operate by modifying latent representation, either before or during the diffusion process. 

Tree-Ring~\cite{wen2023tree} embeds a structured ring-shaped pattern in the frequency domain of the initial noise, achieving strong robustness to geometric transformations. Subsequent works~\cite{ci2024ringid, varlamov2024metr} extended this idea with discretized ring patterns and multi-bit encoding. MaXsive~\cite{mao2025maxsive} injects a structured X-shaped template into the latent noise to enable geometric alignment, while encoding the watermark message in separate latent components. Gaussian Shading~\cite{yang2024gaussian} preserves the statistical properties of the noise and embeds the watermark by modulating latent variables according to predefined quantile intervals of the normal distribution. Stable Signature~\cite{fernandez2023stable} adopts a different strategy by fine-tuning the generative model itself, specifically the VAE decoder, to imprint a watermark into the generated latent. 

Compared with post-generation watermarks, in-generation methods offer several practical advantages that motivate our focus. By weaving the mark into the sampling trajectory rather than adding it afterward, several designs achieve minimal perceptual degradation — e.g., Gaussian Shading~\cite{yang2024gaussian} is provably distribution-preserving — while exhibiting markedly stronger robustness to conventional distortions than spatial-domain post-hoc schemes; standardized stress tests~\cite{an2024waves} confirm that methods like Tree-Ring~\cite{wen2023tree} withstand perturbations on which post-hoc watermarks usually fail. This combination of quality and resilience is precisely what makes their security against adaptive, watermark-aware removal an open and consequential question — the one we examine in this work.

\subsection{Attacks on Invisible Watermarks}

The robustness of watermarking schemes is determined not only by resistance to benign image distortions, but also by the strength of the attacker considered during evaluation. Standard perturbations such as JPEG compression, blur, noise, cropping, and geometric transforms remain necessary sanity checks, yet recent work shows that robustness to such distortions does not imply security against adaptive removal or forgery.

One line of work studies analytical attacks that estimate the watermark directly. Yang et al.~\cite{yang2024steganalysisdigitalwatermarkingdefense} show that content-agnostic watermark patterns can often be recovered by averaging multiple watermarked images and subtracting the estimated pattern, and that this estimate may even enable forgery. A second major family comprises regeneration-based attacks. Zhao et al.~\cite{zhaoInvisibleImageWatermarks2024} demonstrate that invisible watermarks can be removed by first perturbing an image and then reconstructing it with a generative prior, which motivates diffusion-based baselines such as Diffusion Regeneration and Diffusion Rinsing attacks used in our experiments. Related diffusion purification and model-substitution attacks were analyzed by Saberi et al.~\cite{saberiRobustnessAIImageDetectors2023}. Liang et al.~\cite{liangBaselineMethodRemoving2025} further show that Deep Image Prior can serve as a competitive black-box baseline, while Alam et al.~\cite{alam2025saliency} improve the quality--removal trade-off with saliency-aware diffusion reconstruction (SADRE).

Another line of work studies adaptive and optimization-based removal. Lukas et al.~\cite{lukasLeveragingOptimizationAdaptive2023} formulate watermark evasion as an optimization problem and show that optimized attacks can substantially outperform fixed perturbation baselines. UnMarker~\cite{kassis2025unmarker} further argues that robust defensive watermarks necessarily rely on spectral carriers and introduces a universal spectral attack that succeeds without detector feedback, making it particularly relevant to frequency-based watermarking. Complementary work also considers model-level attacks: Hu et al.~\cite{hu2024stable} show that decoder-rooted watermarks such as Stable Signature can be neutralized by fine-tuning the generator itself.

Finally, recent studies emphasize that removal is only one side of the threat model. Muller et al.~\cite{mullerBlackBoxForgeryAttacks2025} and Sou{\v{c}}ek et al.~\cite{soucek2025transferable} demonstrate realistic black-box forgery and removal attacks for diffusion and post-hoc image watermarks, with the latter corresponding to the WMForger method in our experiments. 
Overall, current evidence suggests that watermark robustness should be assessed not only under conventional distortions but also against steganalytic, regenerative, adaptive, and forgery-oriented attacks.

\section{Proposed Method}

\subsection{Problem Statement}
We consider the following realistic attack setting.

\noindent\textbf{Watermarking process.} Let $\mathcal{W}$ be a watermarking scheme that produces a watermarked image $x^w$ provided a secret key $\kappa$. 
The watermarking process can be integrated into the image generation process, in which case, given a prompt $P$, $x^w=\mathcal{W}(P, \kappa)$.
Another case is a watermark applied post-hoc: given an image without a watermark $x$, $x^w=\mathcal{W}(x, \kappa)$
The watermarking scheme is associated with a verifier $\mathcal{V}$ which, given a query image $x^q$, outputs a probability $p=\mathcal{V}(x^q) \in [0,1]$ of the image being watermarked. The image is considered watermarked if
$p\geq \tau$ for a chosen threshold $\tau$.
\begin{figure*}[htbp]
    \centerline{\includegraphics[width=0.99\textwidth]{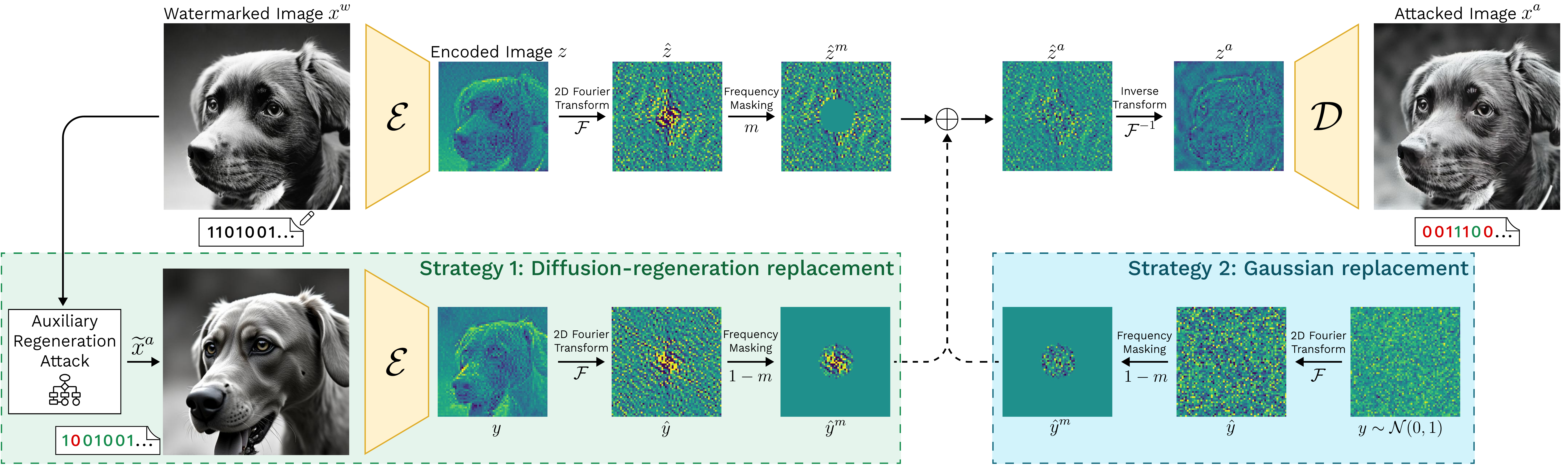}}
    \caption{The overall scheme of the proposed Latent Frequency Masking removal algorithm }
\label{fig:Attack_scheme}
\vspace{-1.3em}
\end{figure*}

\noindent\textbf{Adversary’s objective.} Given a watermarked image $x^w$, the adversary attempts to produce an attacked image $x^a = \mathcal{A}(x^w)$ with the following goals:

\begin{enumerate}
\item Minimize the probability produced by $\mathcal{V}$ until the image is no longer considered watermarked $\mathcal{V}(x^a) < \tau$ 
\item Maximize the perceptual quality of the attacked image relative to the watermarked image $Q(x^a, x^w)$, i.e., minimize the distortion introduced by the attack. Various quality metrics can be selected, e.g., PSNR, SSIM \cite{wang2004image}, LPIPS \cite{zhang2018unreasonable}.
\end{enumerate}

\noindent\textbf{Adversary’s capabilities}
We adopt a threat model in which the adversary has access to:
\begin{enumerate}
\item the watermarked image $x^w$; 
\item a publicly available trained latent diffusion model $(\mathcal{E}, \mathcal{D}, \epsilon_\theta)$ with VAE encoder $\mathcal{E}$, decoder $\mathcal{D}$, and noise prediction network $\epsilon_\theta$ which was used to produce the image $x^w$;
\item the ability to query the public verifier $\mathcal{V}$.
\end{enumerate}

The adversary is assumed not to have access to:
\begin{enumerate}
\item the clean image $x$;
\item the watermark key $\kappa$;
\item the prompt $P$ used during generation; 
\item any internal details of the verifier $\mathcal{V}$, including its architecture or parameters.
\end{enumerate}

\subsection{Algorithm description}
The proposed attack encodes the image using the VAE encoder, computes the spectrum via FFT, performs masking based on a mask and a replacement vector in the frequency region, computes inverse FFT, and produces an outptut image with the VAE decoder. Figure \ref{fig:Attack_scheme} provides an overview of our \textbf{Latent Frequency Masking} attack. We discuss the steps of the algorithm and the rationale behind them below.

\subsubsection{Masking vector generation}
We use a vector which we denote as $\hat{y}$ to replace the watermarked frequencies.
We propose two strategies for $\hat{y}$ generation below.

\noindent\textbf{Gaussian replacement.}
We sample $y$ from the standard Gaussian distribution $\mathcal{N}(0,I)$ and compute 
$\hat{y}=\mathcal{F}(y)$.
This choice is motivated by the fact that during the training of the VAE, an additional loss is used to keep the latent distribution closer to $\mathcal{N}(0, I)$. By using a random vector, we discard the watermark information while keeping the latent ``in-distribution'' for the decoder, thereby ensuring higher-quality decoded images.

\noindent\textbf{Diffusion-regeneration replacement.}
As an image-adaptive replacement strategy, we use diffusion regeneration to construct $\hat{y}$. 
Diffusion regeneration is a common baseline for watermark removal because it partially resamples the image through a generative prior, which can disrupt watermark signals that are not
preserved by the regeneration process.
Let $\tilde{\mathcal{A}}$ denote a diffusion-regeneration attack applied to the
watermarked image, and let
$$\tilde{x}^a = \tilde{\mathcal{A}}(x^w)$$
be the regenerated image. We then encode this image into the latent space and compute its spectrum:
$$y = \mathcal{E}(\tilde{x}^a), \qquad \hat{y}=\mathcal{F}(y).$$

The motivation for this strategy is that diffusion regeneration can suppress watermark information, 
but may also introduce visible changes or alter fine image details. By using its latent representation
only as the replacement source inside the masked frequency region, Latent Frequency Masking can inherit part of its watermark-removal effect while preserving the unmasked coefficients of the original watermarked image.

Using random noise lets us keep the algorithm simple and efficient, however, it essentially discards some information present in the latent. Regeneration attack allows us to retain more information about the semantics of the image, but makes the overall algorithm more complex.

\subsubsection{Mask selection}
\label{sec:mask-selection}

Selecting the correct mask is paramount for the success of the attack.
Kassis et al. \cite{kassis2025unmarker} analyze possible locations for watermarking information based on the requirements of Universal Applicability, Universal Stealthiness, and Robustness to Manipulations. They propose that, based on these requirements, the watermark carrier must reside in the spectral domain. This motivates our choice to perform masking in the Fourier space. They also hypothesize that for "semantic" or generative watermarks, the watermarking information will inhabit the low-frequency areas.

We use a binary mask $m$ to encode the watermark location. 
To disrupt the watermark, the algorithm discards the information in the latent space guided by the mask, rendering the watermark undetectable. At the same time, it leaves a significant portion of the latent intact, thereby retaining the image quality and keeping the attacked image close to the original.

Based on the rationale from \cite{kassis2025unmarker}, we propose the following configuration for possible masks. 
We select a subset of the channels of the latent vector  $S \subseteq \{1,\ldots,C\}$ and define the mask for each of them as follows:
\begin{equation}
m_{ijc} = 
\begin{cases}
    1, &  (i - i_0)^2 + (j - j_0)^2 \leq r^2 \\
    0, &  (i - i_0)^2 + (j - j_0)^2 > r^2    \\
\end{cases}
\end{equation}
where $c \in  S$, $i_0, j_0$ represent the center point of the latent, i.e., $i_0=\lfloor H/2\rfloor$, $j_0=\lfloor W/2 \rfloor$, and $r$ is the radius of the mask, which we consider a parameter. For channels that were not selected, $c \notin  S$, we set the mask to $0$.
The final mask is defined by its radius and selected channels.
These parameters define a relatively small search space; in our experiments, we use exhaustive search to establish the optimal configuration for each of the tested watermarking algorithms using TPR@1\%FPR as the selection objective. We select $r\in\{10,15\}$ and $S$ as every single channel and every pair of channels, which gives us $2*(4+6)=20$ variants to search; we consider this number acceptable in realistic scenarios. 


\subsubsection{Algorithm}

The first step of the attack consists of producing the latent vector $z^w$ corresponding to $x^w$ via the VAE encoder $z^w=\mathcal{E}(x^w)\in\mathbb{R}^{H \times W \times C}$. The next step is computing the channel-wise 2-dimensional Fourier transform $\hat{z}^w = \mathcal{F}(z^w) \in \mathbb{C}^{H \times W \times C}$

Given the binary mask $m\in \mathbb{B}^{H \times W \times C}$ (where $\mathbb{B}=\{0,1\}$), a masking vector $\hat{y} \in \mathbb{C}^{H \times W \times C}$, and a parameter $\beta\in[0,1]$, we perform partial masking of a part of the latent. 

The new latent is produced via

\begin{equation}
\label{eq:1}
\hat{z}^a_{ijc} =
\begin{cases}
    \beta \hat{z}^w_{ijc}+ (1-\beta)\hat{y}_{ijc}, &  m_{ijc} = 1 \\
    \hat{z}^w_{ijc}, &  m_{ijc} = 0
\end{cases}
\end{equation}

The parameter $\beta$ controls the strength of the replacement inside the masked frequency region. When $\beta=1$, the masked coefficients remain equal to the original latent coefficients; when $\beta=0$, the masked coefficients are fully replaced. Intermediate values interpolate between these two cases. 
Coefficients outside the mask are independent of $\beta$ and remain unchanged.

Finally, we compute the inverse Fourier transform 
$$z^a = \mathcal{F}^{-1}(\hat{z}^a)$$
and recover the resulting image using the VAE decoder
$$x^a = \mathcal{D}(z^a)$$


The complete algorithm is presented in Algorithm \ref{alg:latent_frequency_masking}.
For the Gaussian replacement strategy, the attack requires a single forward pass through the VAE encoder and decoder, Gaussian noise sampling, and forward and inverse Fourier transform calculation, which makes it computationally efficient. 
Diffusion-regeneration replacement strategy involves diffusion calculation, an additional encoder pass, and a Fourier transform.
Our algorithm does not employ any gradient computation or optimization steps.
The code for the algorithm is available at \url{https://github.com/alvitrioliks/lfm}.

\begin{algorithm}
\caption{Latent frequency masking}
\label{alg:latent_frequency_masking}
\begin{algorithmic}[1]
\Require Watermarked image $x^w$, encoder $\mathcal{E}$ and decoder $\mathcal{D}$ from a Latent Diffusion Model with latent dimension $K=H \times W \times C$, mask $m\in \mathbb{B}^{H \times W \times C}$, parameter $\beta$, masking mode $s$
\Require Existing attack $\tilde{\mathcal{A}}$ if $s=\textsc{Attack}$
\Ensure Attacked image $x^a$

\State $z^w \gets \mathcal{E}(x^w)$ \Comment{Encode to latent space}
\State $\hat{z}^w \gets \mathcal{F}(z^w)$ \Comment{Fourier transform}

\If{$s=\textsc{Gaussian}$}
    \State $y \sim \mathcal{N}(0,I), \quad y\in\mathbb{R}^{H\times W\times C}$ \Comment{Sample Gaussian noise in latent space}
    \State $\hat{y} \gets \mathcal{F}(y)$ \Comment{Fourier transform the noise}
\ElsIf{$s=\textsc{Diffusion}$}
    \State $\tilde{x}^a \gets \tilde{\mathcal{A}}(x^w)$ \Comment{Produce an attacked image}
    \State $y \gets \mathcal{E}(\tilde{x}^a)$ \Comment{Encode to latent space}
    \State $\hat{y} \gets \mathcal{F}(y)$ \Comment{Fourier transform}
\EndIf

\For{$i \gets 1$ to $H$, $j \gets 1$ to $W$, $c \gets 1$ to $C$}
    \If{$m_{ijc} = 1$}
        \State $\hat{z}^a_{ijc} \gets \beta \hat{z}^w_{ijc}+ (1-\beta)\hat{y}_{ijc}$
    \Else
        \State $\hat{z}^a_{ijc} \gets \hat{z}^w_{ijc}$
    \EndIf
\EndFor

\State $z^a \gets \mathcal{F}^{-1}(\hat{z}^a)$ \Comment{Inverse Fourier transform}
\State $x^a \gets \mathcal{D}(z^a)$ \Comment{Decode to pixel space}
\State \Return $x^a$
\end{algorithmic}
\end{algorithm}

\subsection{Theoretical analysis}
This section provides a theoretical analysis of the distortion introduced by the attack. We prove that the distortion is bounded and depends on the parameters of the algorithm.


\begin{theorem}[Distortion bound for latent frequency masking]
\label{thm:lfm_bound}
Let $x^w \in \mathbb{R}^{H_I \times W_I \times 3}$ be a watermarked image generated by a latent diffusion model $(\mathcal{E}, \mathcal{D}, \epsilon_\theta)$, and $z^w=\mathcal{E}(x^w)\in\mathbb{R}^{H\times W\times C}$ be the latent produced from $x^w$. 

Let $\mathcal{F}$ denote a channel-wise 2-dimensional unitary discrete Fourier transform on the latent space, and write
$\hat{z}^w = \mathcal{F}(z^w)$.

Let $m \in \mathbb{B}^{H \times W \times C}$ be a binary frequency mask,
let $\hat{y} \in \mathbb{C}^{H \times W \times C}$ be the masking vector and let $\hat{z}^{a} \in \mathbb{C}^{H \times W \times C}$ be the attacked latent frequency coefficients defined in \eqref{eq:1},

Let $z^a = \mathcal{F}^{-1}(\hat{z}^a)$ and $x^a = \mathcal{D}(z^a)$ be the attacked latent and the attacked image produced by Algorithm \ref{alg:latent_frequency_masking}.

Assume that the decoder $\mathcal{D}$ is $L$-Lipschitz with respect to the Euclidean norm, i.e.,
$$
\|\mathcal{D}(z)-\mathcal{D}(z')\|_2 \leq L \|z-z'\|_2
\quad \text{for all } z,z' \in \mathbb{R}^{H \times W \times C}.
$$
Then the pixel-space distortion introduced by latent frequency masking satisfies

$$
\|x^w-x^a\|_2 \leq
\|x^w-\mathcal{D}(\mathcal{E}(x^w))\|_2 +
L \sqrt{ \sum_{i,j,c:m_{ijc}=1} \left| \hat{z}_{ijc}^w-\hat{z}_{ijc}^a \right|^2}.
$$

\end{theorem}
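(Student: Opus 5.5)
The plan is to insert the VAE reconstruction $\mathcal{D}(\mathcal{E}(x^w)) = \mathcal{D}(z^w)$ as an intermediate point and apply the triangle inequality:
$$
\|x^w - x^a\|_2 \le \|x^w - \mathcal{D}(z^w)\|_2 + \|\mathcal{D}(z^w) - \mathcal{D}(z^a)\|_2 .
$$
The first term is already the reconstruction error appearing in the statement, since $z^w=\mathcal{E}(x^w)$. This uses only that $x^a=\mathcal{D}(z^a)$, as produced by Algorithm \ref{alg:latent_frequency_masking}.

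For the second term, we invoke the $L$-Lipschitz assumption on $\mathcal{D}$, which gives $\|\mathcal{D}(z^w)-\mathcal{D}(z^a)\|_2 \le L\|z^w - z^a\|_2$. This step requires $z^a$ to lie in $\mathbb{R}^{H\times W\times C}$, the domain on which the Lipschitz property is stated.

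The step needing the most care is exactly this reality of $z^a$. For Gaussian or encoder-derived $y$, $\hat{y}=\mathcal{F}(y)$ is the transform of a real tensor. For the circular mask centered at $(\lfloor H/2\rfloor,\lfloor W/2\rfloor)$, the paper's indexing implicitly assumes a centered (fftshifted) spectrum. In that case the mask is symmetric under $(i,j)\mapsto(-i,-j)$, at least for even $H,W$ with appropriate index conventions, so Hermitian symmetry is preserved and $z^a$ is real. Otherwise one must interpret $\mathcal{D}(z^a)$ via the real part, or as the decoder applied to the algorithm's output. I would state this as an implicit convention of the theorem: $z^a$ is the real latent fed to $\mathcal{D}$. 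If it is not real, one notes that taking the real part is a contraction in $\ell_2$, so the final bound still holds.

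Next, we use unitarity of the channel-wise 2D DFT, which is a direct sum of unitary maps and hence unitary on the whole latent. By Parseval this gives $\|z^w - z^a\|_2 = \|\mathcal{F}^{-1}(\hat{z}^w-\hat{z}^a)\|_2 = \|\hat{z}^w - \hat{z}^a\|_2$.

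Finally, by definition \eqref{eq:1}, $\hat{z}^a_{ijc}=\hat{z}^w_{ijc}$ whenever $m_{ijc}=0$. Hence the squared norm $\|\hat{z}^w-\hat{z}^a\|_2^2$ reduces to the sum over entries with $m_{ijc}=1$. Combining the three steps yields the claimed inequality.

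As a remark, one could make the bound explicit: on the mask, $\hat{z}^w-\hat{z}^a=(1-\beta)(\hat{z}^w-\hat{y})$, so the second term equals $L(1-\beta)\|m\odot(\hat{z}^w-\hat{y})\|_2$. This shows the dependence on $\beta$, the mask size, and the replacement vector.
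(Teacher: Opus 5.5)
Your proposal is correct and follows the same route as the paper's proof: the triangle inequality through $\mathcal{D}(z^w)$, the Lipschitz bound, Parseval/unitarity of $\mathcal{F}$, and then dropping the unmasked coefficients, where $\hat{z}^w_{ijc}=\hat{z}^a_{ijc}$. The paper's proof silently skips whether $z^a$ is real, and you handle this correctly: either Hermitian symmetry is preserved so $z^a$ is real, or taking the real part only shrinks $\|z^w-z^a\|_2$ because $z^w$ is real.
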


\begin{proof}

Since $x^a=\mathcal{D}(z^a)$, and using the triangle inequality 

\begin{equation}
\begin{split}
\label{proof_eq_1}
\|x^w-x^a\|_2 
&= 
\|x^w-\mathcal{D}(z^a)\|_2 \\
&\leq
\|x^w-\mathcal{D}(z^w)\|_2 +  \|\mathcal{D}(z^w)-\mathcal{D}(z^a)\|_2
\end{split}
\end{equation}

From the Lipschitz property
$$\|\mathcal{D}(z^w)-\mathcal{D}(z^a)\|_2 \leq L\|z^w-z^a\|_2$$

Using a well-known property of the Fourier transform known as the Parseval theorem or the Plancherel theorem
$$L\|z^w-z^a\|_2 = L\|\mathcal{F}(z^w-z^a)\|_2$$

Note that the output of the Fourier transform is complex, and the complex $l_2$ norm is used for the right-hand side.

Since the Fourier transform is a linear transformation
$$L\|\mathcal{F}(z^w-z^a)\|_2 = L\|\mathcal{F}(z^w)-\mathcal{F}(z^a)\|_2 = L\|\hat{z}^w-\hat{z}^a\|_2=$$

By the definition of the $l_2$ norm
$$= L\sqrt{\sum_{i}^{H}\sum_{j}^{W}\sum_{c}^{C}\left|\hat{z}_{ijc}^w-\hat{z}_{ijc}^a\right|^2}=$$

Notice that $\hat{z}_{ijc}^w = \hat{z}_{ijc}^a$ for $i,j,c$ such that  $m_{ijc}\neq1$, therefore
\begin{equation}
\label{proof_eq_2}
= L\sqrt{\sum_{i,j,c: m_{ijc}=1 }\left|\hat{z}_{ijc}^w-\hat{z}_{ijc}^a\right|^2}
\end{equation}
Combining \eqref{proof_eq_1} and \eqref{proof_eq_2}, and using $z^w=\mathcal{E}(x^w)$ we get the desired bound.

\end{proof}

\noindent\textbf{Discussion} The theorem states that the distortion introduced into the watermarked image by the attack process is bounded by the sum of the VAE reconstruction error and the distance between the latents in the region of frequencies defined by the mask.

The size of the mask directly impacts the distortion: the smaller the masked region, the lower the distortion introduced. This insight guides the selection of mask radius in our experiments: we want to select the smallest radius that allows us to erase the watermarking information.

The choice of the vector $\hat{y}$ also affects the distortion: the closer it is to $\hat{z}^w$, the better. This helps to justify the design of the Diffusion replacement strategy: since the regenerated image is similar to the original, the latents will also be similar, and therefore the distortion will be lower.

\noindent\textbf{The Lipschitz continuity of the decoder}
One of the assumptions in Theorem~\ref{thm:lfm_bound} is that the VAE decoder is Lipschitz continuous. 
This assumption is natural for the Stable Diffusion decoder used in our experiments. 
After its weights are fixed, the decoder is a deterministic map $\mathcal{D}:\mathbb{R}^{H \times W \times C}\rightarrow\mathbb{R}^{H_I\times W_I\times 3}$ implemented by a neural network. 
The network is built from convolutional and linear layers, pointwise activation functions, upsampling operations, residual connections, attention blocks, and normalization layers.
The composition of Lipschitz maps is also Lipschitz, with a constant bounded by the product of the constituent constants (and by corresponding sums for residual branches). 
Therefore, if all the building blocks are Lipschitz under the Euclidean norm, then the decoder also admits some finite Lipschitz constant $L$. 

Prior work gives bounds and estimation procedures for fully connected layers, convolutional layers, pooling layers, activation functions, normalization, and residual connections
under $\ell_2$, $\ell_1$, and $\ell_\infty$ norms~\cite{gouk2021regularisation}. For attention blocks, however, the situation is more complex: by itself, the standard dot-product self-attention is not globally Lipschitz for an unbounded input domain \cite{kim2021lipschitz}.
However, it can be shown that together with normalization layers, the combined map is Lipschitz. In particular, \cite{castin2023smooth} proves that for inputs of length $n$ restricted to any compact set, self-attention has an explicit finite Lipschitz upper bound. In the considered architecture, GroupNorm is applied before the query, key, and value projections, and since GroupNorm normalizes each channel group using variance, its output is bounded. Hence, the attention module only receives inputs in a bounded set, and the bounded-domain Lipschitz result applies. The combined map is therefore Lipschitz with a constant obtained by composing the GroupNorm, attention, and residual constants.

We note that, in practice, the Lipschitz constant depends on the weights and architecture of the specific decoder network.
For a deep decoder, the worst-case bound may be somewhat loose, because the product of layer constants can be large. 
We do not rely on the numerical value of $L$ in our attack; the assumption is used only to justify that small latent perturbations imply bounded changes in the decoded image.

\section{Experiments}
The prompts for evaluation are drawn from two publicly available datasets, namely \href{https://huggingface.co/datasets/poloclub/diffusiondb}{DiffusionDB} \cite{wangDiffusionDBLargescalePrompt2022} and \href{https://huggingface.co/datasets/whyen-wang/coco_captions}{MS-COCO} \cite{lin2015microsoftcococommonobjects}, 1000 prompts are selected from each. The corresponding watermarked images are generated from the prompts using the respective watermarking methods and their underlying diffusion model (all methods employ Stable Diffusion~v2.1~\cite{sd_cite} as a base model). We perform the parameter search needed for LFM on a set of 100 watermarked images generated from DiffusionDB prompts separate from the main evaluation set. 

\begin{figure*}[htbp]
    \centerline{\includegraphics[width=0.99\textwidth]{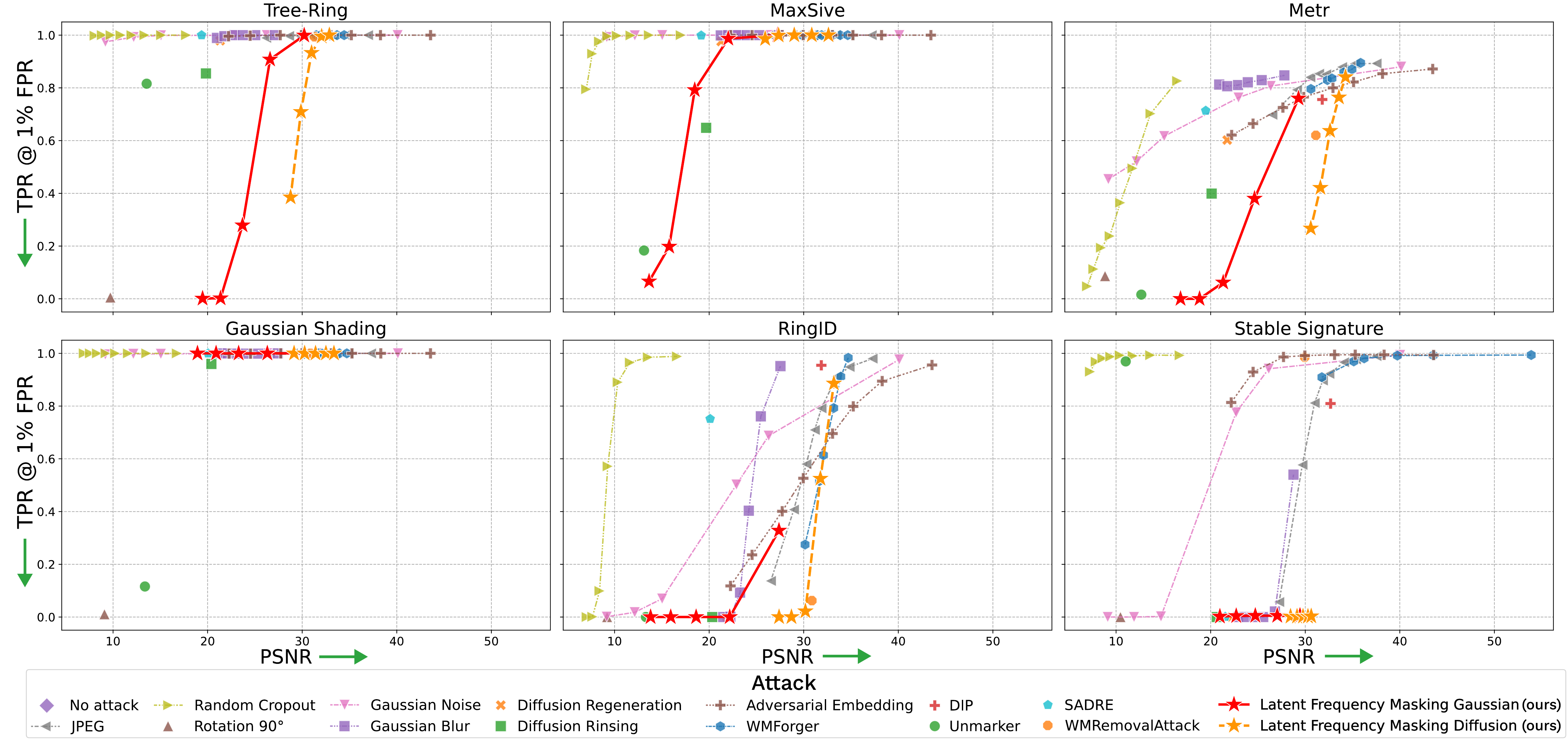}}
    \caption{Removal efficiency~(TPR@1\%FPR$\downarrow$ of the watermark after attack) vs image quality~(PSNR$\uparrow$) for different attacks measured on DiffusionDB dataset. Lower-right is better; LFM variants sit on the Pareto front for all methods except Gaussian Shading}
\label{fig:tpr_psnr_diffdb}
\end{figure*}

\subsection{Compared methods}

We evaluate our algorithm against six representative diffusion watermarking methods, namely Stable Signature \cite{fernandez2023stable}, Tree-Ring \cite{wen2023tree}, RingID \cite{ci2024ringid}, Gaussian Shading \cite{yang2024gaussian}, Maxsive \cite{mao2025maxsive}, and METR \cite{varlamov2024metr}. 

We perform a comparison with image distortions which are known to hinder watermark performance, such as Gaussian Blur, Gaussian Noise, JPEG compression, Rotation, and Random Cropout, as well as several existing attack methods, including 
Adversarial Embedding \cite{an2024waves}, Diffusion Regeneration \cite{zhaoInvisibleImageWatermarks2024}, Diffusion Rinsing \cite{an2024waves}, DIP \cite{liangBaselineMethodRemoving2025}, SADRE \cite{alam2025saliency}, WMForger \cite{soucek2025transferable}, WMRemovalAttack \cite{mullerBlackBoxForgeryAttacks2025} and Unmarker \cite{kassis2025unmarker}
to comprehensively evaluate the effectiveness of attacks and the quality of the attacked images. We select the FLUX \cite{flux2024} model for Diffusion Regeneration and Diffusion Rinsing, as well as for the Diffusion replacement strategy in LFM.

\subsection{Evaluation metrics}

We evaluate watermark erasing attacks from two perspectives: (i) how reliably the watermark can be removed using the given transformation, and (ii) how much this distortion affects the image quality.

The first perspective concerns the reliability of watermark extraction. 
Reliability evaluation depends on whether a method embeds only a detectable signal or a full binary payload. In zero-bit watermarking, the detector decides whether a watermark is present, yielding a binary detection setting.
Practical systems are usually expected to operate under strict false-alarm constraints; therefore, robustness is commonly summarized by the True Positive Rate achieved at a predefined low False Positive Rate, i.e., TPR@x\%FPR, we use an FPR of 1\%.

For methods that embed multi-bit messages, extraction quality is determined by the similarity between the recovered and original payloads. 
Multi-bit extraction can also be interpreted as a detection problem by using the agreement between the embedded and decoded messages as a detection score. This makes it possible to compare payload-based methods with zero-bit methods using the same operating-point metrics, including TPR@x\%FPR.

The second perspective focuses on the image quality. When a reference image is available, the distortion caused by the erasing method can be measured with full-reference metrics such as PSNR, SSIM~\cite{wang2004image}, and LPIPS~\cite{zhang2018unreasonable}. They evaluate pixel-level correspondence between the distorted and reference images. In settings where no clean reference is available, perceptual quality can instead be estimated with no-reference image quality metrics. To this end, we employ the popular CLIP-IQA~\cite{wang2023exploring} model. 

\section{Results}
\subsection{Attack Performance}

\begin{table*}[t]
\caption{Attack performance and quality metrics averaged across all tested watermarks on DiffusionDB and MS-COCO datasets}
\centering
\resizebox{\textwidth}{!}{


\begin{tabular}{lccccc|ccccc}
\toprule
& \multicolumn{5}{c}{DiffusionDB} & \multicolumn{5}{c}{MS-COCO} \\
\cmidrule(lr){2-6} \cmidrule(lr){7-11}
& Robustness 
& \multicolumn{4}{c}{Quality metrics}
& Robustness
& \multicolumn{4}{c}{Quality metrics} \\
\cmidrule(lr){3-6} \cmidrule(lr){8-11}
Attack
& TPR@1\%FPR $\downarrow$ & PSNR $\uparrow$ & SSIM $\uparrow$ & LPIPS $\downarrow$ & CLIP IQA $\uparrow$
& TPR@1\%FPR $\downarrow$ & PSNR $\uparrow$ & SSIM $\uparrow$ & LPIPS $\downarrow$ & CLIP IQA $\uparrow$ \\
\midrule
No attack
& 0.97 & $\infty$ & 1.000 & 0.000 & 0.866
& 0.98 & $\infty$ & 1.000 & 0.000 & 0.883 \\

Adv. Embedding \cite{an2024waves}
& 0.71 & 22.2 & 0.462 & 0.402 & 0.349
& 0.71 & 22.3 & 0.459 & 0.439 & 0.301 \\

Diff. Regeneration \cite{zhaoInvisibleImageWatermarks2024}
& 0.51 & 21.7 & 0.620 & 0.205 & 0.840
& 0.51 & 22.4 & 0.657 & 0.178 & 0.833 \\

Diff. Rinsing \cite{an2024waves}
& 0.42 & 20.1 & 0.553 & 0.258 & 0.856
& 0.44 & 20.8 & 0.591 & 0.223 & 0.838 \\

DIP \cite{liangBaselineMethodRemoving2025}
& 0.86 & 31.9 & 0.929 & 0.105 & 0.714
& 0.91 & 32.3 & 0.942 & 0.099 & 0.771 \\

SADRE \cite{alam2025saliency}
& 0.69 & 19.9 & 0.733 & 0.114 & 0.860
& 0.69 & 21.2 & 0.777 & 0.109 & 0.848 \\

Unmarker \cite{kassis2025unmarker}
& 0.35 & 12.8 & 0.274 & 0.419 & 0.699
& 0.38 & 12.9 & 0.281 & 0.402 & 0.664 \\

WMForger \cite{soucek2025transferable}
& 0.85 & 32.3 & 0.891 & 0.143 & 0.629
& 0.87 & 33.1 & 0.906 & 0.129 & 0.698 \\

WMRemovalAttack \cite{mullerBlackBoxForgeryAttacks2025}
& 0.73 & 30.8 & 0.837 & 0.185 & 0.746
& 0.73 & 31.1 & 0.845 & 0.213 & 0.656 \\

LFM Gaussian (ours)
& 0.36 & 21.6 & 0.802 & 0.119 & 0.857
& 0.32 & 21.2 & 0.817 & 0.129 & 0.887 \\

LFM Diffusion (ours)
& 0.44 & 28.4 & 0.897 & 0.043 & 0.848
& 0.44 & 28.9 & 0.908 & 0.045 & 0.887 \\
\bottomrule
\end{tabular}
}
\vspace{-1.5em}
\label{tab:tpr_metrics_table}
\end{table*}

Figure \ref{fig:tpr_psnr_diffdb} reports TPR@1\%FPR versus image quality, measured by PSNR, for the six watermarking methods.
For Latent Frequency Masking, each point corresponds to a specific $\beta$ parameter; we select $\beta = 0, 0.2, 0.4, 0.6, 0.8$. Mask radius and channels are selected for each watermarking method using the method described in 
Section \ref{sec:mask-selection}, namely $c=1$, $r=10$ for Tree-Ring, Gaussian Shading, and Stable Signature, $c=4$, $r=10$ for METR and $c=\{1,4\}$, $r=15$ for Maxsive and RingID. We also vary parameters for several attacks, namely compression strength for JPEG, filter size for Gaussian blur, noise $\sigma$ for Gaussian noise, crop size for Random Cropout, size of the perturbation $\varepsilon$ for adversarial embedding, and learning rate for WMForger.

The results show that Latent Frequency Masking~(LFM) achieves strong attack performance across most methods, achieving
TPR@1\%FPR close to $0$ for Stable Signature, Tree-Ring, RingID, Maxsive, and METR for certain $\beta$ parameters.
Notably, LFM is the only method capable of reliably erasing the Tree-Ring watermark within reasonable quality constraints. 

Latent Frequency Masking with Diffusion replacement strategy achieves higher image quality compared to the Gaussian strategy, with several dB PSNR improvement in exchange for increased complexity. For MaxSive, however, it fails to erase the watermark; this could be explained by the fact that MaxSive is resistant to Diffusion Regeneration, using regenerated values for the replacement introduces watermarking information back into the latent.

For Gaussian Shading, almost all attacks except for Rotation and Unmarker are unable to bypass the watermark, and even they perform poorly in terms of quality. The lack of success of the Latent Frequency Masking attack can be explained by the fact that Gaussian Shading, by design, embeds the watermarking information throughout the entire latent, rendering partial masking ineffective.

Figure \ref{fig:tpr_lpips_diffdb} shows attack performance versus image quality according to the LPIPS metric. It can be seen that, in terms of LPIPS and PSNR, LFM performs similarly. The Diffusion replacement strategy still achieves better quality than Gaussian; however, the improvement is less pronounced. For Stable Signature, LFM outperforms all other attacks in LPIPS, achieving almost a perfect removal rate across all quality levels.

Figures \ref{fig:tpr_psnr_diffdb} and \ref{fig:tpr_lpips_diffdb} show that in most cases, LFM achieves Pareto-optimal performance among tested methods in terms of image quality-efficiency tradeoff.


Table \ref{tab:tpr_metrics_table} presents performance and quality metrics of attacks averaged across all six watermarking methods on DiffusionDB and MS-COCO datasets. We select $\beta=0.4$ for LFM Gaussian and $\beta=0$ for LFM Diffusion since they achieve a good trade-off in quality and performance. LFM Gaussian, together with Unmarker achieve the lowest TPR@1\%FPR score. For LFM Diffusion TPR@1\%FPR is slightly worse due to its poor performance on MaxSive; the quality metrics are significantly higher. Both LFM variants also show high quality according to all 4 measured quality metrics, placing close to the top among the measured algorithms.

These findings show that Latent Frequency Masking not only successfully bypasses watermark verification in most cases but also, on average, outperforms current attacks in terms of overall performance-quality tradeoff.

\subsection{Attack Speed Comparison}
Figure~\ref{fig:tpr_vs_time} presents a plot of attack performance vs the computation times of attacks that were used for comparison in this paper. The timings were measured on a server with an NVIDIA RTX A6000 GPU and averaged over the 1000 images from the DiffusionDB dataset. We do not include time for parameter search for LFM since it is performed once per watermarking algorithm and the selected parameters are reused for each tested image.
Latent Frequency Masking with Gaussian replacement strategy achieves the highest attack speed among the compared algorithms while having comparable or better attack performance. For Latent Frequency Masking with the Diffusion replacement strategy, speed is predictably comparable to Diffusion Regeneration; however, it's notable that it performs slightly better in terms of quality and performance.

\begin{figure*}[htbp]
    \centerline{\includegraphics[width=0.99\textwidth]{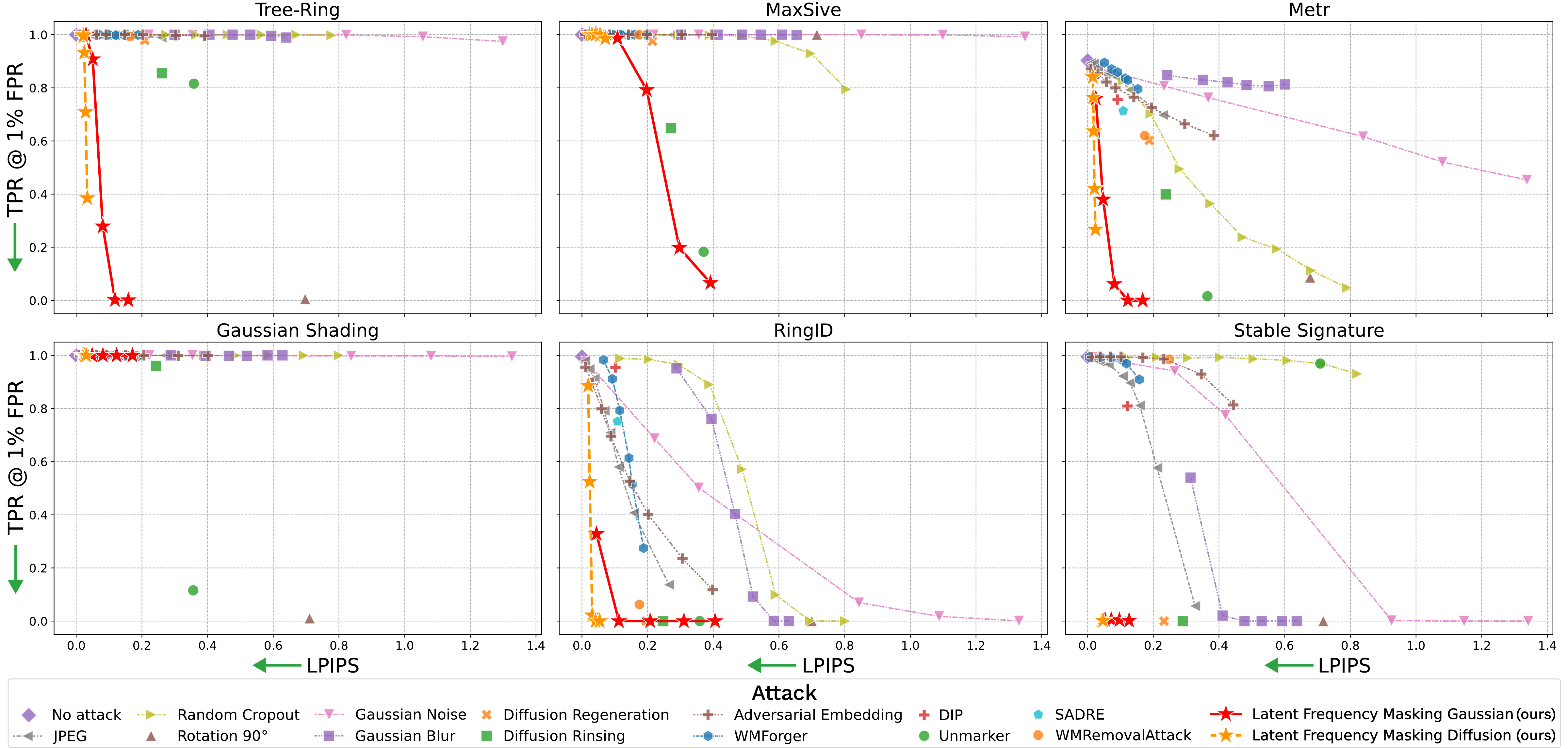}}
    \caption{Removal efficiency~(TPR@1\%FPR$\downarrow$ of the watermark after attack) vs image quality~(LPIPS$\uparrow$) for different attacks measured on DiffusionDB dataset. Lower-left is better; LFM variants sit on the Pareto front for all methods except Gaussian Shading}
\label{fig:tpr_lpips_diffdb}
\vspace{-1.3em}
\end{figure*}

\begin{figure}[tbp]
    \centerline{\includegraphics[width=0.9\columnwidth]{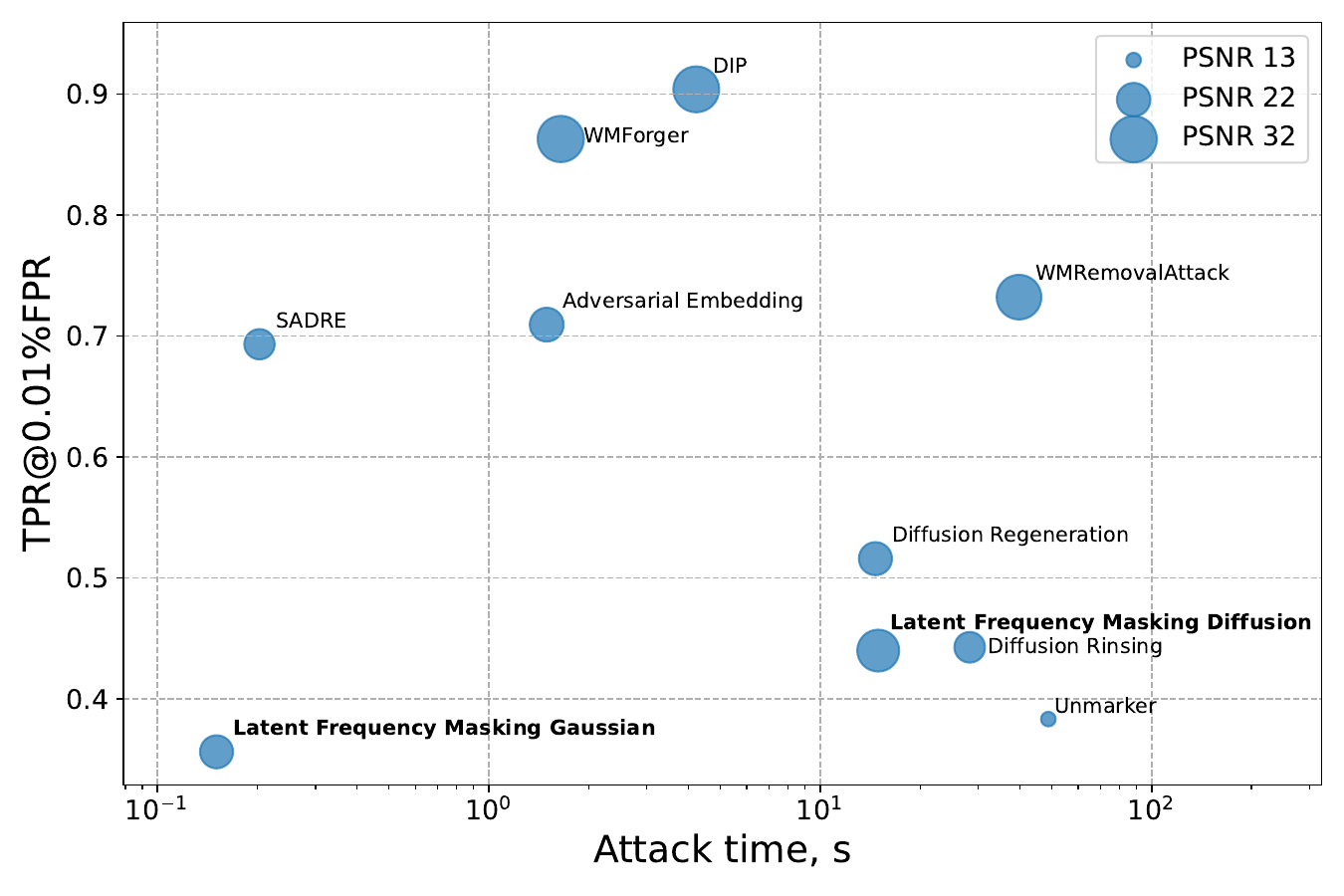}}
    \caption{Average attack performance vs execution time. The size of the dot represents the average PSNR on the DiffusionDB dataset}
\vspace{-1.5em}
\label{fig:tpr_vs_time}
\end{figure}

\subsection{Visual Comparison}
Figure~\ref{fig:visual_comp} presents a visual comparison of several attack methods. It can be seen that some of the methods introduce visible artifacts or noticeably change image details. LFM mostly avoids this; however, we note that it still disturbs color in some parts of the image. For the presented image, only our method is able to erase the watermark information.

\begin{figure*}[htbp]
    \centerline{\includegraphics[width=0.99\textwidth]{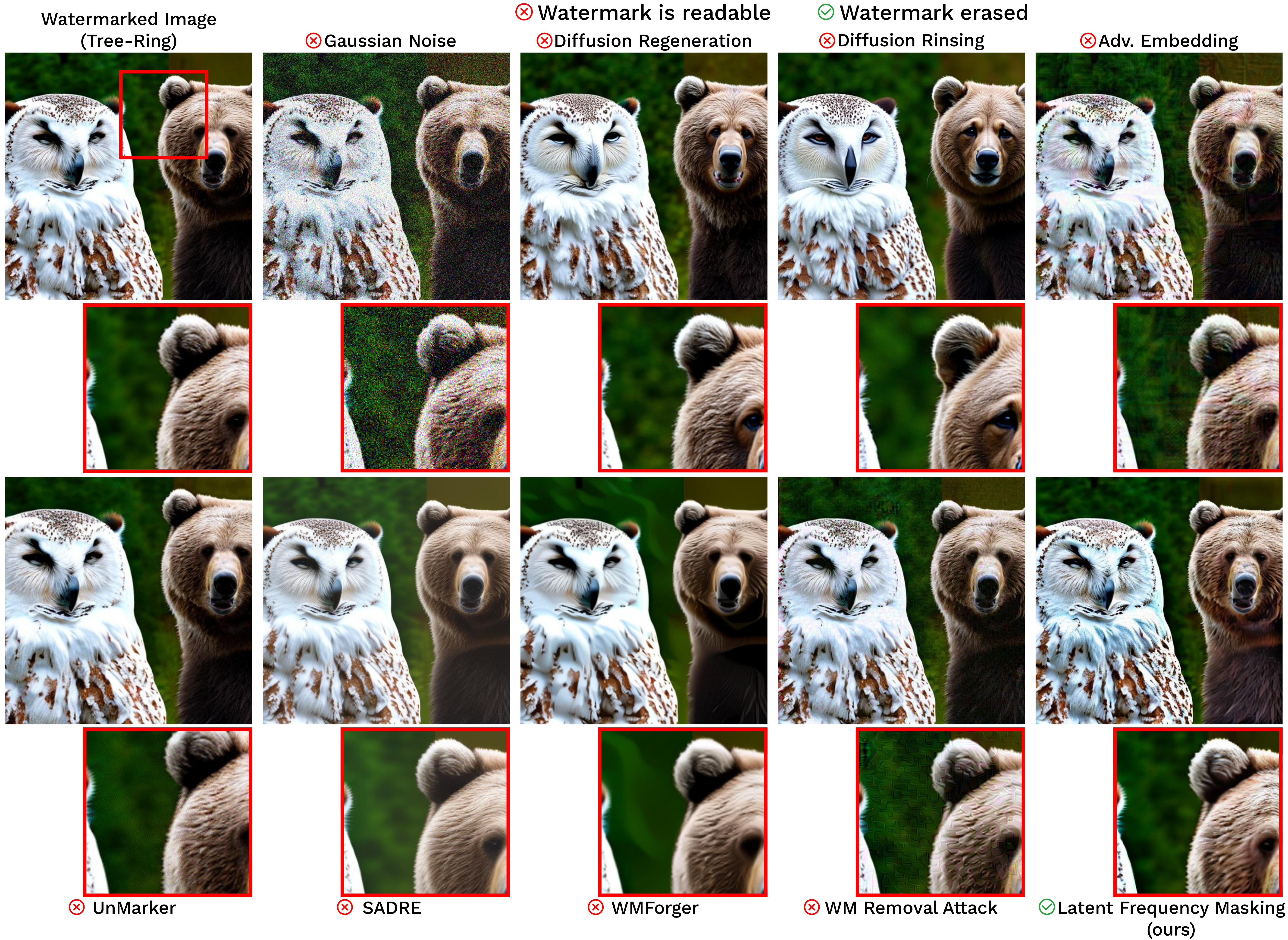}}
    \caption{Examples of a Tree-Ring-watermarked image subjected to different attacks. The prompt is sampled from the DiffusionDB database.}
\label{fig:visual_comp}
\vspace{-1.5em}
\end{figure*}



\subsection{Ablation Study}
We perform an ablation study on all of the steps of the proposed algorithm on the DiffusionDB dataset to assess their importance. Since the algorithm is composed of consecutive transformations, i.e., VAE, FFT, and masking, we perform attacks with one or more of these transformations disabled. We use Tree-Ring watermark and set $\beta=0$.
Results are presented in Table \ref{tab:ablation}. All variants of the attack with excluded steps fail to erase the watermarking information, even though some of them show better PSNR scores. We conclude that all steps of the algorithm are necessary for watermark removal.

\begin{table}[t]
\caption{Ablation results, VAE represents only the Encoder and Decoder steps without any masking, VAE + Masking: masking performed in spatial domain without FFT step, FFT + Masking: frequency masking is performed on images without Encoder and Decoder steps, VAE + FFT + Masking: full algorithm }
\centering

\begin{tabular}{lcc}
\toprule
Attack                                & TPR@1\%FPR   & PSNR \\
\midrule
VAE                                   & 1.0          & 33.1  \\        
VAE + Masking                         & 1.0          & 23.7  \\
FFT + Masking                         & 0.995        & 11.4  \\
VAE + FFT + Masking (Gaussian)        & 0.001        & 19.4  \\
VAE + FFT + Masking (Diffusion)       & 0.385        & 28.7  \\
\bottomrule
\end{tabular}

\label{tab:ablation}
\vspace{-1.5em}
\end{table}

\section{Discussion and Limitations}
In this paper, we propose a novel attack on watermarks for diffusion-generated images. 
The method operates by replacing selected Fourier coefficients in the VAE latent representation and supports Gaussian and diffusion-regeneration replacement strategies. By modifying only selected frequency coefficients of the VAE latent representation, Latent Frequency Masking can substantially reduce watermark detectability while preserving most of the visual content of the original image. The Gaussian replacement variant
is particularly efficient, requiring only VAE encoding and decoding together with Fourier transforms. The diffusion-regeneration variant improves image quality in several settings by using an
image-adaptive replacement latent, although its effectiveness depends on whether regeneration itself suppresses the target watermark.

The experiments reveal that Tree-Ring, RingID, Stable Signature, MaXsive, and METR are vulnerable to Latent Frequency Masking under
suitable mask and replacement parameters. In contrast, Gaussian Shading is significantly harder to remove with partial masking, which is consistent with its design: watermark information is
distributed throughout the latent rather than concentrated in a localized spectral region. This suggests that localizing watermark information in predictable latent-frequency regions may
create an exploitable weakness, while distributed embedding can improve resistance to this class of attacks.

The proposed attack has several limitations. First, our mask selection procedure uses search over a small parameter space and therefore may not be
optimal for an adversary with a very strict query budget. Second, the attack assumes access to a compatible latent diffusion autoencoder, which is realistic for many open generative
pipelines but may not hold for proprietary systems. Finally, our theoretical bound characterizes the distortion introduced by the latent modification, but does not by itself guarantee watermark removal; removal depends on how each
watermarking scheme stores and detects its signal.

From a defensive perspective, the findings indicate that watermark evaluation should include latent-space and frequency-space attacks in addition to standard image distortions. Robust watermarking schemes should avoid placing most of the watermark evidence in compact, predictable latent-frequency regions. Promising directions include detector designs that remain sensitive after partial latent replacement, and training procedures that explicitly include Latent Frequency Masking as an adversarial transformation.

\section{Conclusion}
In conclusion, we introduced Latent Frequency Masking, a simple and efficient attack for evaluating the robustness of invisible watermarks in diffusion-generated images. Across multiple watermarking schemes, LFM achieves a favorable trade-off between watermark removal, perceptual quality, and computational cost, exposing weaknesses that are not captured by conventional robustness tests. We hope these results encourage stronger evaluation protocols and more robust watermark designs for generative image provenance.

\section*{Acknowledgment}

The work of K. Abud, A. Yakushev, A. Akimenkov, D. Obydenkov, Y. Markin and A. Antsiferova was supported by a grant, provided by the Ministry of Economic Development of the Russian Federation (agreement dated June 20, 2025 No. 139-15-2025-011, identifier 000000C313925P4G0002). 
The research was carried out using the MSU-270 supercomputer of Lomonosov Moscow State University.

\bibliographystyle{IEEEtran}
\bibliography{reference}

\end{document}